\documentclass[a4paper,11pt]{article}
\pdfoutput=1

\usepackage{jheppub}
\usepackage[T1]{fontenc}
\usepackage{amsmath,amssymb,amsthm}
\usepackage{mathtools}
\usepackage{array}
\usepackage{mleftright}
\mleftright
\usepackage{tikz}
\usetikzlibrary{arrows.meta}
\graphicspath{{figs/}}

\theoremstyle{definition}
\newtheorem{proposition}{Proposition}[section]
\newtheorem{theorem}[proposition]{Theorem}
\newtheorem{remark}[proposition]{Remark}

\newcommand{\eps}{\varepsilon}
\newcommand{\R}{\mathbb{R}}
\newcommand{\E}{\mathbb{E}}
\newcommand{\cF}{\mathcal{F}}
\newcommand{\cG}{\mathcal{G}}
\newcommand{\hatQ}{\widehat{Q}}
\newcommand{\Cw}{C_{\mathrm{W}}}
\newcommand{\Cb}{C_{\mathrm{b}}}
\newcommand{\Keft}{K_{1,\mathrm{EFT}}}
\newcommand{\Kmic}{K_{1,\mathrm{mic}}}
\newcommand{\Uex}{U_{1,\mathrm{exact}}}
\newcommand{\Umod}{U_{1,\mathrm{model}}}
\newcommand{\Hess}{D^2Q}

\tikzset{
  resp/.style={-{Latex[length=2.0mm]}, line width=.45pt},
  obs/.style={circle, draw, fill=black, inner sep=0pt, minimum size=2.2pt},
  bulk/.style={rectangle, draw, fill=black, inner sep=1.2pt, minimum size=3.2pt},
  bblob/.style={circle, draw, line width=.45pt, fill=gray!12,
                minimum size=7.5mm, inner sep=1pt},
  dvert/.style={circle, draw, line width=.45pt, fill=white,
                minimum size=1.2mm, inner sep=0pt},
  lab/.style={font=\scriptsize, inner sep=1pt},
  tlab/.style={font=\tiny, inner sep=1pt}
}
\newcommand{\diagVFourTransport}{%
  \vcenter{\hbox{%
  \begin{tikzpicture}[baseline=-0.65ex, x=1em, y=1em]
    \node[bblob] (B) at (0,0) {$\scriptstyle V_4^0$};
    \node[obs] (T1) at (3.0,0.55) {};
    \node[obs] (T2) at (3.0,-0.55) {};
    \draw[resp] (B.east) -- (T1);
    \draw[resp] (B.east) -- (T2);
    \node[tlab] at (0,-1.6) {$0$};
    \node[tlab] at (3.0,-1.6) {$t$};
  \end{tikzpicture}}}%
}
\newcommand{\diagVFourSource}{%
  \vcenter{\hbox{%
  \begin{tikzpicture}[baseline=-0.65ex, x=1em, y=1em]
    \node[bulk] (S) at (0,0) {};
    \node[inner sep=1pt] at (-0.9,0) {$\Sigma$};
    \node[obs] (T1) at (3.0,0.55) {};
    \node[obs] (T2) at (3.0,-0.55) {};
    \draw[resp] (S.east) -- (T1);
    \draw[resp] (S.east) -- (T2);
    \node[tlab] at (0,-1.6) {$u$};
    \node[tlab] at (3.0,-1.6) {$t$};
  \end{tikzpicture}}}%
}
\newcommand{\diagKOneSource}{%
  \vcenter{\hbox{%
  \begin{tikzpicture}[baseline=-0.75ex, x=1em, y=1em]
    \node[dvert] (V) at (0,0) {};
    \node[obs]   (T) at (3.1,0) {};
    \draw[resp] (V.east) -- (T);
    \draw[line width=.45pt] (V) to[out=135,in=45,looseness=35]
      node[lab, above] {$V_4$} (V);
    \node[lab]  at (-1.7,0) {$\frac12\,\Hess$};
    \node[tlab] at (0,-0.7) {$s$};
    \node[tlab] at (3.1,-0.7) {$t$};
  \end{tikzpicture}}}%
}
\newcommand{\diagKOneTransport}{%
  \vcenter{\hbox{%
  \begin{tikzpicture}[baseline=-0.65ex, x=1em, y=1em]
    \node[bblob] (B) at (0,0) {$\scriptstyle K_1^0$};
    \node[obs]   (T) at (3.0,0) {};
    \draw[resp] (B.east) -- (T);
    \node[tlab] at (0,-1.6) {$0$};
    \node[tlab] at (3.0,-1.6) {$t$};
  \end{tikzpicture}}}%
}

\title{\boldmath Collective Kernel EFT for Pre-activation ResNets:\\
Exact Block Law and Finite Validity Window of $G$-only Closure}

\author{Hidetoshi Kawase}
\author{and Toshihiro Ota}
\affiliation{CyberAgent, Inc., Shibuya, Tokyo 150--6121 Japan}
\emailAdd{kawase\_hidetoshi@cyberagent.co.jp}
\emailAdd{ota\_toshihiro@cyberagent.co.jp}

\abstract{%
In finite-width deep neural networks, the empirical kernel $G$ evolves stochastically across layers.
We develop a collective kernel effective field theory (EFT) for pre-activation ResNets based on a $G$-only closure hierarchy and diagnose its finite validity window.
Exploiting the exact conditional Gaussianity of residual increments, we derive an exact stochastic recursion for $G$.
Applying Gaussian approximations systematically yields a continuous-depth ODE system for the mean kernel $K_0$, the kernel covariance $V_4$, and the $1/n$ mean correction $\Keft$, which emerges diagrammatically as a one-loop tadpole correction.
Numerically, $K_0$ remains accurate at all depths.
However, the $V_4$ equation residual accumulates to an $O(1)$ error at finite time, primarily driven by approximation errors in the $G$-only transport term.
Furthermore, $\Keft$ fails due to the breakdown of the source closure, which exhibits a systematic mismatch even at initialization.
These findings highlight the limitations of $G$-only state-space reduction and suggest extending the state space to incorporate the sigma-kernel.
}

\keywords{}
\arxivnumber{}
\preprint{}

\makeatletter
\gdef\@fpheader{ \vspace{1em} }
\makeatother

\begin{document}
\maketitle
\flushbottom

\section{Introduction}

Developing a systematic theory of finite-width effects in deep neural networks---beyond the infinite-width Gaussian process limit~\cite{neal1996bayesian,williams1996computing,lee2018dnn_gp,matthews2018gp_behaviour} and the Neural Tangent Kernel~\cite{jacot2018ntk}---has been an active research area~\cite{yaida2020nongaussian,roberts2022principles,hanin2022correlation_functions}.
Banta et al.~\cite{banta2024structures} constructed a diagrammatic EFT for MLPs, organizing $1/n$ corrections as a field theory via susceptibility structure and ghost fields.
This paper extends that perspective to pre-activation ResNets~\cite{he2016deepresidual,he2016identity},
adopting a residual scaling $\eps$ under which the kernel drift is $O(\eps^2)$---consistent with the stable-ResNet regime analyzed in~\cite{hayou2021stable_resnet}.
Complementary to our ResNet initialization-side collective-kernel EFT, Guillen et al.~\cite{guillen2025finite_width_ntk} developed a diagrammatic framework for finite-width NTK statistics and leading-order training dynamics in MLPs.

Signal propagation for ResNets has been studied via mean-field theory~\cite{yang2017mean_field_resnet,schoenholz2017deep_information,poole2016transient_chaos}
and the infinite depth-and-width limit~\cite{li2021future_log_gaussian,li2022neural_covariance_sde,hayou2023width_depth_commute,peluchetti2020diffusion,peluchetti2021doubly_infinite},
with field-theoretic formulations also emerging~\cite{littwin2021random_kernels,fischer2023field_theory_resnets}.
This paper contributes a systematic description of finite-width corrections from the perspective of a collective kernel stochastic EFT.

Both ResNets and MLPs admit an exact conditional-Gaussian layer law at each block.
The key difference lies in which variable naturally plays the role of the Gaussian primary:
in MLPs (as in Banta et al.~\cite{banta2024structures}), the preactivation $\phi^{\ell+1}$ given $\phi^\ell$ is directly the Gaussian variable;
in ResNets, it is the increment $\eta^\ell$ that is the natural conditional-Gaussian variable.
Taking the increment as the primary variable yields a ghost-free exact block law.
This choice of starting point makes it possible to precisely identify where exact and effective descriptions diverge in the kernel dynamics,
allowing us to rigorously determine which equations require which approximations.

\subsection{Setup and Notation}

We fix input points $x_1,\dots,x_N$ and use indices $a,b,c,d=1,\dots,N$ (input points),
$i,j=1,\dots,n$ (neurons), and $\ell=0,1,\dots,L$ (depth).
The preactivation of neuron $i$ at depth $\ell$ is written
$\phi_i^\ell = (\phi_i^\ell(a))_{a=1}^N \in \R^N$.

One block of the pre-activation ResNet~\cite{he2016identity} is defined as
\begin{align}
  \phi_i^{\ell+1}(a) &= \phi_i^\ell(a) + \eps\,\eta_i^\ell(a), \label{eq:resnet-block}\\
  \eta_i^\ell(a)     &= \sum_{j=1}^n W_{ij}^\ell\,\sigma(\phi_j^\ell(a)) + b_i^\ell \notag
\end{align}
where weights and biases are drawn from independent Gaussian initializations
$W_{ij}^\ell \sim \mathcal{N}(0,\Cw/n)$ and $b_i^\ell \sim \mathcal{N}(0,\Cb)$.
Here $\eps>0$ is the residual scaling parameter; in the continuous-depth limit we set $dt=\eps^2$
(based on the scaling that the conditional mean change of the kernel is $O(\eps^2)$ and finite-width fluctuations are $O(\eps/\sqrt{n})$;
see also~\cite{hayou2021stable_resnet} for a related stability analysis of this scaling).
The initial preactivations are drawn as i.i.d.\ centered Gaussians:
\begin{equation}
  \phi_i^0 \stackrel{\text{i.i.d.}}{\sim} \mathcal{N}(0, K_0^0), \qquad i = 1,\dots,n,
  \label{eq:phi0-init}
\end{equation}
where $K_0^0 \in \mathrm{Sym}_N^{++}$ is a given initial covariance matrix.
Under this assumption, $\bar{K}^0 = K_0^0$, $V_4^0 \sim O(1)$ (Wishart fluctuations),
and the theorem $\Uex^0=0$ stated below holds.

\subsection{Summary of Main Results}

The main results of this paper are summarized in four points.

\begin{itemize}
  \item \textbf{Exact block law and ghost-free action.}
    Conditioning on $\phi^\ell$, the increments $\eta_i^\ell$ are exactly Gaussian,
    with conditional covariance $\hatQ_{ab}^\ell = \Cb + \frac{\Cw}{n}\sum_j \sigma(\phi_j^\ell(a))\sigma(\phi_j^\ell(b))$.
    Integrating out the increment $\eta^\ell$ yields an exact discrete MSRJD action with no ghost fields (Section~\ref{sec:block}).

  \item \textbf{Exact kernel recursion and microscopic source.}
    The empirical kernel $G_{ab}^\ell := n^{-1}\sum_i \phi_i^\ell(a)\phi_i^\ell(b)$ updates exactly as $G^{\ell+1} = G^\ell + \eps H^\ell + \eps^2 J^\ell$,
    and $\Uex^\ell = n(\bar{S}^\ell - E_2(K_0^\ell))$ is obtained as an exact identity around the chosen background $K_0$;
    this serves as the source for $K_1$ and is compared with the model source $\Umod^\ell$ (Section~\ref{sec:kernel}).

  \item \textbf{Hierarchy derivation via the three-stage approximation scheme (GC0, LIN, GC1).}
    The equations for $K_0$ and $V_4$ are derived systematically from (GC0) and (LIN),
    and that for $\Keft$ additionally from (GC1).
    Here $V_4 \equiv V_4^{(G)}$ denotes the kernel fluctuation covariance,
    which differs from the preactivation connected 4-point function $V_4^{(\phi)}$
    of Banta et al.~\cite{banta2024structures}
    by a conditional-Wishart term (see Remark~\ref{rem:bridge}).
    $\Keft$ is reinterpreted diagrammatically as the one-loop tadpole of the drift cubic vertex
    (Sections~\ref{sec:kernel}--\ref{sec:diag}).

  \item \textbf{Finite validity window of the GC0+LIN-based $V_4$ hierarchy and hierarchical localization of breakdown.}
    While $K_0$ is well reproduced at all depths,
    the equation residual of $V_4$ accumulates with time, and the linearized $G$-only covariance theory fails to hold at long times.
    Direct measurement of $\Sigma_{\mathrm{mic}}$ in representative components suggests that the source approximation is not the primary cause of breakdown.
    The primary failure of $\Keft$ is the breakdown of the NLO source model of (GC1)
    (already at $\ell=0$, $\Uex^0=0$ while $\Umod^0\neq0$),
    with $V_4$ breakdown acting as a secondary amplification (Sections~\ref{sec:numerics}--\ref{sec:disc}).
\end{itemize}

The remainder of this paper is organized as follows.
Section~\ref{sec:block} derives the exact block law.
Section~\ref{sec:kernel} derives the exact recursion for the empirical kernel and the ODEs for $K_0$, $V_4$, and $\Keft$.
Section~\ref{sec:eft} constructs the collective bilocal EFT
and provides its diagrammatic interpretation.
Section~\ref{sec:numerics} presents numerical validation,
Section~\ref{sec:disc} discusses the interpretation and future directions,
and Section~\ref{sec:conc} concludes.
Code reproducing all numerical figures is available at \url{https://github.com/kavvase/resnet_eft}.

\section{Exact One-Block Law}\label{sec:block}

\subsection{Exact Conditional Gaussian Law of the Increment}

\begin{proposition}[Conditional distribution of the increment]\label{prop:increment-gaussian}
  Conditioning on $\phi^\ell$, the increments $\eta_1^\ell, \dots, \eta_n^\ell$ are
  conditionally independent $N$-dimensional Gaussian vectors:
  \begin{equation}
    \eta_i^\ell \mid \phi^\ell \sim \mathcal{N}(0,\, \cG_\eta^\ell[\phi^\ell]),
    \label{eq:increment-gaussian}
  \end{equation}
  \begin{equation}
    \cG_{\eta,ab}^\ell[\phi^\ell]
    := \Cb + \frac{\Cw}{n}\sum_{j=1}^n \sigma(\phi_j^\ell(a))\sigma(\phi_j^\ell(b)).
    \label{eq:G-eta}
  \end{equation}
  This identity is exact for finite $n$ and finite $N$.
\end{proposition}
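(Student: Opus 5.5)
The key structural fact is that the layer-$\ell$ parameters $(W^\ell,b^\ell)$ are drawn independently of everything that determines $\phi^\ell$. By the recursion \eqref{eq:resnet-block}, $\phi^\ell$ is a measurable function of $\phi^0$ and $(W^{k},b^{k})_{k<\ell}$ only. So under the joint law, $(W^\ell,b^\ell)$ is independent of $\phi^\ell$. I would first record this independence, and then compute the conditional law of $\eta^\ell$ by treating $\phi^\ell$ as a fixed deterministic array. Formally, I would invoke the freezing lemma: for independent $X$ and $Y$, the conditional law of $f(X,Y)$ given $X=x$ is the law of $f(x,Y)$.

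With $\phi^\ell$ fixed, define $s_j(a):=\sigma(\phi_j^\ell(a))$; these are deterministic numbers. Each
\[
\eta_i^\ell(a)=\sum_j W_{ij}^\ell s_j(a)+b_i^\ell
\]
is then a linear map applied to the Gaussian vector $(W_{i\cdot}^\ell,b_i^\ell)\in\R^{n+1}$. Hence $\eta_i^\ell\in\R^N$ is jointly Gaussian with mean zero.

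Next I compute the covariance using $\E[W_{ij}W_{ik}]=\frac{\Cw}{n}\delta_{jk}$, $\E[b_i^2]=\Cb$, and $\E[W b]=0$:
\[
\E[\eta_i(a)\eta_i(b)]=\Cb+\frac{\Cw}{n}\sum_j s_j(a)s_j(b)=\cG^\ell_{\eta,ab}[\phi^\ell].
\]
For $i\neq i'$, the rows $(W_{i\cdot},b_i)$ and $(W_{i'\cdot},b_{i'})$ are independent. Therefore $\eta_i^\ell$ and $\eta_{i'}^\ell$ are independent, which gives conditional independence across neurons. No limit in $n$ or $N$ is taken anywhere, so the statement is exact. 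If $\cG_\eta$ is singular (for instance when $\Cb=0$ and the $s_j$ are degenerate), the Gaussian is understood in the degenerate sense, defined via its characteristic function $\exp(-\tfrac12 k^\top\cG_\eta k)$. I would verify the result through characteristic functions to cover this case uniformly.

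The only step needing care is the measure-theoretic justification of conditioning, i.e.\ establishing the independence of $(W^\ell,b^\ell)$ from $\phi^\ell$. I would prove it by induction on $\ell$ using the assumption that all parameters and $\phi^0$ are mutually independent. Everything else is a routine Gaussian computation.
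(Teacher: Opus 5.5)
Your proposal is correct and follows the paper's argument. Conditional on $\phi^\ell$, each $\eta_i^\ell$ is a linear image of the independent Gaussian row $(W_{i\cdot}^\ell,b_i^\ell)$; its covariance follows from $\E[W_{ij}^\ell W_{ik}^\ell]=(\Cw/n)\delta_{jk}$ and $\E[(b_i^\ell)^2]=\Cb$, and independence across neurons follows from independence of the rows. Your freezing-lemma justification that $(W^\ell,b^\ell)$ is independent of $\phi^\ell$, and your characteristic-function treatment of the degenerate case, make rigorous steps that the paper's proof takes for granted.
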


\begin{proof}
Conditional on $\phi^\ell$,
$\eta_i^\ell(a) = \sum_{j=1}^n W_{ij}^\ell\,\sigma(\phi_j^\ell(a)) + b_i^\ell$
is a linear combination of independent Gaussian variables and hence Gaussian.
The conditional mean vanishes since $\E[W_{ij}^\ell] = \E[b_i^\ell] = 0$.
The conditional covariance follows from
$\E[W_{ij}^\ell W_{ik}^\ell] = (\Cw/n)\delta_{jk}$ and $\E[(b_i^\ell)^2] = \Cb$:
\begin{align}
  \E[\eta_i^\ell(a)\,\eta_i^\ell(b) \mid \phi^\ell]
  = \frac{\Cw}{n}\sum_{j=1}^n \sigma(\phi_j^\ell(a))\sigma(\phi_j^\ell(b)) + \Cb
  = \cG_{\eta,ab}^\ell[\phi^\ell].
\end{align}
Different neurons $i \neq i'$ use independent weights and are therefore conditionally independent.
\end{proof}

The joint measure of the block can thus be written as
\begin{equation}
  P(\phi^{\ell+1}, \eta^\ell \mid \phi^\ell)
  = \prod_{i=1}^n\prod_{a=1}^N \delta\bigl(\phi_i^{\ell+1}(a) - \phi_i^\ell(a) - \eps\,\eta_i^\ell(a)\bigr)
    \cdot P(\eta^\ell \mid \phi^\ell).
  \label{eq:joint-block}
\end{equation}

\subsection{Exact MSRJD Block Action}

Exponentiating each delta constraint via the Fourier representation $\delta(z) = \int \frac{d\hat\phi}{2\pi}\,e^{i\hat\phi z}$
and introducing response fields $\hat\phi_i^\ell \in \R^N$, we write the joint measure (\ref{eq:joint-block}) as
\begin{equation}
  P(\phi^{\ell+1}, \eta^\ell \mid \phi^\ell)
  \propto \int D\hat\phi^\ell\;\exp\bigl[-S_{\mathrm{mix}}^\ell\bigr],
\end{equation}
with the mixed-field action
\begin{align}
  S_{\mathrm{mix}}^\ell
  &= -i\sum_{i=1}^n (\hat\phi_i^\ell)^\top(\phi_i^{\ell+1} - \phi_i^\ell - \eps\,\eta_i^\ell) \notag\\
  &\quad + \frac{1}{2}\sum_{i=1}^n (\eta_i^\ell)^\top (\cG_\eta^\ell[\phi^\ell])^{-1} \eta_i^\ell
    + \frac{n}{2}\log\det(2\pi\,\cG_\eta^\ell[\phi^\ell]).
  \label{eq:mixed-action}
\end{align}
When $\cG_\eta^\ell$ is degenerate (e.g., $C_b=0$ with linearly dependent inputs),
$(\cG_\eta^\ell)^{-1}$ is interpreted as the Moore--Penrose pseudoinverse and $\log\det$ as the pseudo-determinant.

Fixing $\phi^\ell$, $\phi^{\ell+1}$, and $\hat\phi^\ell$, completing the square in $\eta^\ell$
(writing $\cG \equiv \cG_\eta^\ell[\phi^\ell]$) gives
\begin{equation}
  \frac{1}{2}(\eta_i + i\eps\,\cG\hat\phi_i)^\top\cG^{-1}(\eta_i + i\eps\,\cG\hat\phi_i)
  + \frac{\eps^2}{2}(\hat\phi_i)^\top\cG\,\hat\phi_i.
\end{equation}
Performing the Gaussian integral over each $\eta_i$ produces a factor $[\det(2\pi\cG)]^{n/2}$,
which exactly cancels the third term $[\det(2\pi\cG)]^{-n/2}$ in the mixed-field action.
The action after integrating out $\eta^\ell$ is thus
\begin{equation}
  S_{\mathrm{MSR}}^\ell
  = -i\sum_{i=1}^n (\hat\phi_i^\ell)^\top(\phi_i^{\ell+1} - \phi_i^\ell)
    + \frac{\eps^2}{2}\sum_{i=1}^n (\hat\phi_i^\ell)^\top \cG_\eta^\ell[\phi^\ell]\, \hat\phi_i^\ell.
  \label{eq:MSRJD}
\end{equation}
This is the exact discrete Martin--Siggia--Rose--Janssen--De~Dominicis (MSRJD) block action~\cite{martin1973statistical,janssen1976lagrangean,dedominicis1976techniques}.
Because the determinant bookkeeping is absorbed into the increment integration, no ghost fields appear.

Finally, integrating over $\hat\phi^\ell$ yields the exact block transition kernel:
\begin{equation}
  P(\phi^{\ell+1} \mid \phi^\ell)
  = \bigl[\det(2\pi\eps^2\cG_\eta^\ell)\bigr]^{-n/2}
    \exp\!\left[-\frac{1}{2\eps^2}\sum_{i=1}^n
      (\phi_i^{\ell+1}-\phi_i^\ell)^\top (\cG_\eta^\ell)^{-1}
      (\phi_i^{\ell+1}-\phi_i^\ell)\right].
  \label{eq:block-kernel}
\end{equation}
Each neuron independently satisfies
$\phi_i^{\ell+1} - \phi_i^\ell \sim \mathcal{N}(0, \eps^2\cG_\eta^\ell[\phi^\ell])$,
and $dt=\eps^2$ gives the natural time scale for the continuous-depth limit.

\section{Exact Kernel Recursion and Gaussian Closure Hierarchy}\label{sec:kernel}

\subsection{Exact Kernel Recursion and Conditional Moments}\label{sec:recursion}

The kernel recursion is governed by the following operators, all defined for any symmetric positive definite matrix $K$.
Here $\chi_K$ and $\Hess[K]$ are Fréchet derivatives on the space $\mathrm{Sym}_N$ of symmetric matrices,
with the partial derivative with respect to $K_{cd}$ using the symmetrized convention identifying $(c,d)$ with $(d,c)$:
\begin{align}
  E_2(K)_{ab} &:= \E_{z\sim\mathcal{N}(0,K)}[\sigma(z_a)\sigma(z_b)],
  \label{eq:E2-def}\\
  Q(K)_{ab} &:= \Cb + \Cw\,E_2(K)_{ab},
  \label{eq:Q-def}\\
  (\chi_K[v])_{ab} &:= \frac{\partial Q_{ab}}{\partial K_{cd}}\bigg|_K v_{cd},
  \label{eq:chi-def}\\
  (\Hess[K]:V)_{ab} &:= \frac{\partial^2 Q_{ab}}{\partial K_{cd}\partial K_{ef}}\bigg|_K V_{cd,ef}.
  \label{eq:hess-def}
\end{align}
The factor of $1/2$ in $\Hess[K]:V$ (which appears in the NLO closure of GC1) is naturally determined by this symmetrized convention.
These derivatives assume $\sigma \in C^2$ in the classical sense.
Defining the centered fluctuation operator
$\Delta_j^K(a,b) := \sigma(\phi_j(a))\sigma(\phi_j(b)) - E_2(K)_{ab}$,
the conditional covariance \eqref{eq:G-eta} decomposes as
$\cG_\eta[\phi] = Q(K) + \frac{\Cw}{n}\sum_{j} \Delta_j^K$.
For the Gaussian reference measure $\phi_j \sim \mathcal{N}(0,K)$,
the tadpole-free condition $\E[\Delta_j^K]_{z\sim\mathcal{N}(0,K)} = 0$ holds.

Defining the empirical kernel $G_{ab}^\ell := n^{-1}\sum_{i=1}^n \phi_i^\ell(a)\phi_i^\ell(b)$ and
substituting the update rule $\phi_i^{\ell+1} = \phi_i^\ell + \eps\,\eta_i^\ell$ gives the exact one-step update:
\begin{equation}
  G_{ab}^{\ell+1} = G_{ab}^\ell + \eps\, H_{ab}^\ell + \eps^2\, J_{ab}^\ell,
  \label{eq:kernel-update}
\end{equation}
\begin{align}
  H_{ab}^\ell &:= \frac{1}{n}\sum_{i=1}^n
    \bigl[\phi_i^\ell(a)\eta_i^\ell(b) + \eta_i^\ell(a)\phi_i^\ell(b)\bigr],
  \label{eq:H-def}\\
  J_{ab}^\ell &:= \frac{1}{n}\sum_{i=1}^n \eta_i^\ell(a)\eta_i^\ell(b).
  \label{eq:J-def}
\end{align}

Introducing the natural filtration $\cF_\ell := \sigma(\phi^0, \phi^1, \dots, \phi^\ell)$, we compute the conditional moments.
Since $\E[\eta_i^\ell(a)|\cF_\ell] = 0$, we have $\E[H_{ab}^\ell|\cF_\ell] = 0$.
Using the independence of different neurons and Wick's theorem, the conditional covariance of $H^\ell$ is:
\begin{equation}
  n\,\mathrm{Cov}(H_{ab}^\ell,\, H_{cd}^\ell \mid \cF_\ell)
  = G_{ac}^\ell\hatQ_{bd}^\ell + G_{ad}^\ell\hatQ_{bc}^\ell
    + G_{bc}^\ell\hatQ_{ad}^\ell + G_{bd}^\ell\hatQ_{ac}^\ell,
  \label{eq:H-cov}
\end{equation}
where $\hatQ_{ab}^\ell \equiv \cG_{\eta,ab}^\ell[\phi^\ell] = \Cb + \frac{\Cw}{n}\sum_j \sigma(\phi_j^\ell(a))\sigma(\phi_j^\ell(b))$
is the exact conditional drift kernel.
For $J^\ell$, we have $\E[J_{ab}^\ell|\cF_\ell] = \hatQ_{ab}^\ell$, which yields the exact conditional drift:
\begin{equation}
  \E[G^{\ell+1} - G^\ell \mid \cF_\ell] = \eps^2\,\hatQ^\ell.
  \label{eq:exact-drift}
\end{equation}
A Gaussian Wick expansion gives $n\,\mathrm{Cov}(J_{ab}^\ell,\, J_{cd}^\ell \mid \cF_\ell) = \hatQ_{ac}^\ell\hatQ_{bd}^\ell + \hatQ_{ad}^\ell\hatQ_{bc}^\ell$,
implying $r_J^\ell := \sqrt{n}(J^\ell - \hatQ^\ell) = O_p(1)$ and guaranteeing that $\eps^2 r_J^\ell$ is subleading in the fluctuation recursion.

Defining the full mean $\bar{K}^\ell := \E[G^\ell]$ and ensemble-averaging (\ref{eq:exact-drift}) gives
\begin{equation}
  \bar{K}^{\ell+1} - \bar{K}^\ell = \eps^2\,\E[\hatQ^\ell] = \eps^2\bigl(\Cb + \Cw\bar{S}^\ell\bigr),
  \label{eq:mean-sigma}
\end{equation}
where $S_{ab}^\ell := n^{-1}\sum_i\sigma(\phi_i^\ell(a))\sigma(\phi_i^\ell(b))$ is the sigma-kernel and $\bar{S}^\ell := \E[S^\ell]$.
Defining the background $K_0^\ell$ as the solution of $K_0^{\ell+1} - K_0^\ell = \eps^2 Q(K_0^\ell)$,
and letting $\Kmic^\ell := n(\bar{K}^\ell - K_0^\ell)$ and $\Uex^\ell := n\bigl(\bar{S}^\ell - E_2(K_0^\ell)\bigr)$,
we obtain the exact finite-$n$ recursion:
\begin{equation}
  \Kmic^{\ell+1} - \Kmic^\ell = \eps^2\Cw\,\Uex^\ell.
  \label{eq:K1mic-recursion}
\end{equation}
$\Uex^\ell$ serves as the exact source of $\Kmic$ around the chosen background $K_0$.

Similarly, defining the centered fluctuation $v^\ell := \sqrt{n}(G^\ell - \bar{K}^\ell)$ and the kernel fluctuation covariance $V_4^\ell := \E[v^\ell(v^\ell)^\top]$,
we have the exact fluctuation update:
\begin{equation}
  v^{\ell+1} - v^\ell
  = \eps\,\zeta^\ell + \eps^2 r_J^\ell + \eps^2 d^\ell,
  \label{eq:v-exact}
\end{equation}
with $\zeta^\ell := \sqrt{n}\,H^\ell$ and $d^\ell := \sqrt{n}(\hatQ^\ell - \E[\hatQ^\ell])$.

\subsection{Gaussian Closure Hierarchy and Continuous-Depth ODEs}\label{sec:closure}

Everything so far has been exact. We now introduce three approximations in succession.

\paragraph{(GC0) Full-kernel closure.}
Propagation of chaos (PoC) alone does not guarantee that the single-neuron limit law is Gaussian with covariance $G^\ell$.
(GC0) supplements PoC with an asymptotic Gaussian closure~\cite{roberts2022principles}:
the single-neuron limit law is approximated as a centered Gaussian with covariance $G^\ell$,
\begin{equation}
  \hatQ^\ell = Q(G^\ell) + o_p(1)
  \quad (n \to \infty).
  \tag{GC0}
  \label{eq:GC0}
\end{equation}
This justifies an effective Markov description using only $G$ as the state variable,
and is the minimal assumption needed for deriving the $K_0$ equation.
The breakdown of Gaussian closure (accumulation of non-Gaussianity) is considered the root cause of the long-time failure of $V_4$.
The reason (GC0) is unnecessary for the MLP formulation based on preactivation variables is that the corresponding layer law is already exactly conditionally Gaussian;
the unified description of MLPs and ResNets via the $\alpha$-generalized block
is detailed in Appendix~\ref{sec:appx-mlp}.

\paragraph{(LIN) First-order linearization.}
We assume that $Q$ admits a first-order Taylor expansion around $\bar{K}^\ell$
and that GC0 holds at the fluctuation level ($\hatQ^\ell = Q(G^\ell) + o_p(n^{-1/2})$):
\begin{equation}
  Q\!\left(\bar{K}^\ell + \frac{1}{\sqrt{n}}v^\ell\right)
  = Q(\bar{K}^\ell) + \frac{1}{\sqrt{n}}\chi_{\bar{K}^\ell}[v^\ell] + o_p(n^{-1/2}),
  \quad \hatQ^\ell = Q(G^\ell) + o_p(n^{-1/2}).
  \tag{LIN}
  \label{eq:LIN}
\end{equation}
These two conditions imply $d^\ell = \chi_{K_0^\ell}[v^\ell] + o_p(1)$.
Concretely, the Taylor expansion first gives $d^\ell = \chi_{\bar{K}^\ell}[v^\ell] + o_p(1)$,
and then $\bar{K}^\ell - K_0^\ell = O(n^{-1})$ is used to replace $\chi_{\bar{K}^\ell}$ by $\chi_{K_0^\ell}$.
Note that the $o_p(1)$ precision of (GC0) alone is insufficient to control $\sqrt{n}$-level fluctuations;
(LIN) strengthens GC0 to $o_p(n^{-1/2})$.

\paragraph{(GC1) NLO expansion closure.}
We assume that the expectation of $\hatQ^\ell$ can be expanded to second order:
\begin{equation}
  \E[\hatQ^\ell]
  = Q(\bar{K}^\ell) + \frac{1}{2n}\Hess[K_0^\ell]:V_4^\ell + o(n^{-1}).
  \tag{GC1}
  \label{eq:GC1}
\end{equation}
(GC1) is an additional assumption on the second partial derivatives of $Q$ and their contraction with $V_4$,
beyond what (LIN) requires, and is used for deriving the NLO source of $\Keft$.

\begin{table}[h]
  \centering
  \caption{Logical status of each derivation step}
  \label{tab:exact-assumed}
  \small
  \begin{tabular}{lll}
    \hline
    Statement & Status & Section \\
    \hline
    Increment $\eta^\ell$ is conditionally Gaussian & Exact (finite $n$) & \S\ref{sec:block} \\
    Empirical kernel mean/covariance recursion & Exact identity & \S\ref{sec:recursion} \\
    (GC0): $\hatQ^\ell = Q(G^\ell) + o_p(1)$ & Closure assumption & \S\ref{sec:closure} \\
    (LIN): first-order Taylor of $Q$ around $\bar{K}^\ell$ & Closure assumption & \S\ref{sec:closure} \\
    (GC1): second-order expansion of $\E[\hatQ^\ell]$ & Closure assumption & \S\ref{sec:closure} \\
    $K_0$ ODE (\ref{eq:K0-ode}) & Derived under (GC0) only & \S\ref{sec:closure} \\
    $V_4$ ODE (\ref{eq:V4-ode}) & Derived under (GC0)\,+\,(LIN) & \S\ref{sec:closure} \\
    $\Keft$ ODE (\ref{eq:K1-ode}) & Derived under (GC0)\,+\,(LIN)\,+\,(GC1) & \S\ref{sec:closure} \\
    Collective SDE / block action (\S\ref{sec:eft}) & Kramers--Moyal diffusion approximation & \S\ref{sec:eft} \\
    \hline
  \end{tabular}
\end{table}

The SDE and collective action in Section~\ref{sec:eft} introduce one additional layer of approximation
beyond the closure assumptions above:
the Kramers--Moyal diffusion limit ($\eps\to 0$), which is not an exact rewriting of the discrete recursion.
All exact statements from Sections~\ref{sec:block}--\ref{sec:kernel} remain valid as stated.

\paragraph{Continuous-Depth ODEs for $K_0$, $V_4$, and $\Keft$.}

Setting $\bar{K}^\ell = K_0^\ell + O(n^{-1})$,
(GC0) gives $\E[\hatQ^\ell] = Q(K_0^\ell) + O(n^{-1})$.
Extracting the $O(1)$ term from the mean recursion (\ref{eq:mean-sigma}) gives the discrete recursion
$K_0^{\ell+1} - K_0^\ell = \eps^2\, Q(K_0^\ell)$.
Taking the continuous limit $t = \eps^2\ell$:
\begin{equation}
  \partial_t K_0 = Q(K_0).
  \label{eq:K0-ode}
\end{equation}

For the fluctuation covariance $V_4$, (LIN) implies $d^\ell = \chi_{K_0^\ell}[v^\ell] + o_p(1)$.
Combined with the fact that the leading noise $\eps\zeta^\ell$ is Gaussian
(from the exact conditional Gaussianity of $H^\ell$), we obtain
$v^{\ell+1} - v^\ell = \eps\,\zeta^\ell + \eps^2\chi_{K_0^\ell}[v^\ell] + o_p(\eps)$.
When we compute $V_4^{\ell+1} = \E[v^{\ell+1}(v^{\ell+1})^\top]$ to $O(\eps^2)$, the $O(\eps)$ term vanishes, the $O(\eps^2)$ transport term gives $\chi_{K_0^\ell}[V_4^\ell]$,
and the noise term from the leading order of (\ref{eq:H-cov}) gives
\begin{equation}
  \Sigma_{ab,cd}(K) :=
  K_{ac}Q_{bd}(K) + K_{ad}Q_{bc}(K) + K_{bc}Q_{ad}(K) + K_{bd}Q_{ac}(K).
  \label{eq:Sigma-def}
\end{equation}
Combining these contributions gives the discrete closed recursion
$V_4^{\ell+1} - V_4^\ell = \eps^2\bigl(\chi_{K_0^\ell}[V_4^\ell] + V_4^\ell\chi_{K_0^\ell}^\top + \Sigma(K_0^\ell)\bigr) + o(\eps^2)$,
which in the continuous limit becomes:
\begin{equation}
  \partial_t V_4 = \chi_{K_0}V_4 + V_4\chi_{K_0}^\top + \Sigma(K_0).
  \label{eq:V4-ode}
\end{equation}

To derive the ODE for $\Keft$, we parametrize the deviation of $\bar{K}^\ell$ from $K_0^\ell$ by the asymptotic ansatz
$\bar{K}^\ell = K_0^\ell + \frac{1}{n}\Keft^\ell + o(n^{-1})$,
where $\Keft^\ell = O(1)$ is the leading term of $\Kmic^\ell = n(\bar{K}^\ell - K_0^\ell)$.
Expanding $\E[\hatQ^\ell]$ around $K_0^\ell$ using (GC1):
\begin{equation}
  \E[\hatQ^\ell]
  = Q(K_0^\ell) + \frac{1}{n}\chi_{K_0^\ell}[\Keft^\ell]
    + \frac{1}{2n}\Hess[K_0^\ell]:V_4^\ell + o(n^{-1}).
  \label{eq:EQ-expand}
\end{equation}
Extracting the $O(1/n)$ term from the mean recursion (\ref{eq:mean-sigma}) gives the discrete closed recursion
$\Keft^{\ell+1} - \Keft^\ell = \eps^2\!\left(\chi_{K_0^\ell}[\Keft^\ell] + \frac{1}{2}\Hess[K_0^\ell]:V_4^\ell\right) + o(\eps^2)$.
Taking the continuous limit $t = \eps^2\ell$:
\begin{equation}
  \partial_t \Keft
  = \chi_{K_0}[\Keft] + \frac{1}{2}\Hess[K_0]:V_4.
  \label{eq:K1-ode}
\end{equation}

\begin{remark}
  The three ODEs require different levels of approximation.
  (\ref{eq:K0-ode}) requires only (GC0) (at $o_p(1)$ precision) and concentration of $\bar{G}^\ell$ around $K_0^\ell$.
  (\ref{eq:V4-ode}) additionally requires (LIN), which strengthens GC0 to
  $\hatQ^\ell = Q(G^\ell) + o_p(n^{-1/2})$---this strengthening is essential because
  $d^\ell = \sqrt{n}(\hatQ^\ell - \E[\hatQ^\ell])$ involves $\sqrt{n}$-level fluctuations.
  (\ref{eq:K1-ode}) further requires (GC1).
  In particular, (\ref{eq:K1-ode}) is not the microscopic exact $K_1$ equation:
  the exact source is $\Uex^\ell$ in (\ref{eq:K1mic-recursion}), which in general differs from
  the EFT source $\chi_{K_0}[\Keft] + \frac{1}{2}\Hess[K_0]:V_4$.
\end{remark}

\section{Collective Bilocal Stochastic EFT}\label{sec:eft}

\subsection{Collective Bilocal SDE and Effective Action}

When the conditioning is reduced from $\cF_\ell$ to $G^\ell$ via (GC0), the one-step drift and covariance of the full kernel are given by $\E[\Delta G_{ab}^\ell \mid G^\ell] = \eps^2\, Q_{ab}(G^\ell) + o(\eps^2)$ and $\mathrm{Cov}(\Delta G_{ab}^\ell,\,\Delta G_{cd}^\ell \mid G^\ell) = \frac{\eps^2}{n}\,\Sigma_{ab,cd}(G^\ell) + o(\eps^2/n)$.
The leading stochastic term $\eps H^\ell$ is exactly Gaussian conditional on $\cF_\ell$, and the subleading fluctuation $\eps^2 r_J^\ell$ is guaranteed to be $O_p(\eps^2)$ since $r_J^\ell = O_p(1)$.
A Kramers--Moyal diffusion approximation yields an effective Markov kernel, leading to the continuous-limit bilocal collective SDE:
\begin{equation}
  dG_{ab} = Q_{ab}(G)\,dt + \frac{1}{\sqrt{n}}B_{ab,\mu}(G)\,dW_\mu,
  \qquad B(G)B(G)^\top = \Sigma(G).
  \label{eq:collective-sde}
\end{equation}
Introducing the response field $\hat{G}_{ab}$, the approximate collective MSRJD action for the path probability is
\begin{equation}
  S_{\mathrm{eff}}[G,\hat{G}]
  = \int_0^T dt\;\left[\hat{G}_{ab}\bigl(\partial_t G_{ab} - Q_{ab}(G)\bigr)
    - \frac{1}{2n}\hat{G}_{ab}\,\Sigma_{ab,cd}(G)\,\hat{G}_{cd}\right]
  \label{eq:collective-action}
\end{equation}
(after the contour rotation $i\hat{G} \mapsto \hat{G}$).
In the discrete theory, $\hat{G}_\ell$ is associated with the equation residual at the next step (pre-point convention), meaning $\langle \hat{G}_\ell \cdot G_\ell \rangle = 0$.
This discrete causality carries over to the continuous limit as the It\^o convention, uniquely determining the equal-time retarded response function:
\begin{equation}
  R(s,s) = 0.
  \label{eq:ito-causal}
\end{equation}

\subsection{Expansion around the Background $K_0$}\label{sec:background}

Write $G_{ab} = K_{0,ab} + n^{-1/2}g_{ab}$ and $\hat{G}_{ab} = n^{1/2}\hat{g}_{ab}$.
The background equation $\partial_t K_0 = Q(K_0)$ and a Taylor expansion give
\begin{equation}
  \partial_t G_{ab} - Q_{ab}(G)
  = \frac{1}{\sqrt{n}}\bigl(\partial_t g_{ab} - \chi_{ab,cd}(K_0)g_{cd}\bigr)
    - \frac{1}{2n}\Hess_{ab;cd,ef}(K_0)\,g_{cd}\,g_{ef} + O(n^{-3/2}).
\end{equation}
With $M_{ab,cd;ef} := \partial\Sigma_{ab,cd}/\partial K_{ef}|_{K_0}$:
\begin{equation}
  \Sigma_{ab,cd}(G) = \Sigma_{ab,cd}(K_0)
    + \frac{1}{\sqrt{n}}M_{ab,cd;ef}(K_0)\,g_{ef} + O(n^{-1}).
  \label{eq:sigma-expand}
\end{equation}
The noise cubic $S^{(3)}_{\mathrm{noise}}$ arises from this $O(n^{-1/2})$ term.
Substituting into the full action:
\begin{equation}
  S_{\mathrm{eff}} = S^{(2)}[g,\hat{g}] - \frac{1}{\sqrt{n}}S^{(3)}_{\mathrm{int}}[g,\hat{g}] + O(n^{-1}),
  \label{eq:action-expand}
\end{equation}
\begin{equation}
  S^{(2)}[g,\hat{g}]
  = \int dt\;\left[\hat{g}_{ab}\bigl(\partial_t g_{ab} - \chi_{ab,cd}(K_0)\,g_{cd}\bigr)
    - \frac{1}{2}\hat{g}_{ab}\,\Sigma_{ab,cd}(K_0)\,\hat{g}_{cd}\right],
  \label{eq:quadratic-action}
\end{equation}
\begin{align}
  S^{(3)}_{\mathrm{int}} &:= S^{(3)}_{\mathrm{drift}} + S^{(3)}_{\mathrm{noise}}, \notag\\
  S^{(3)}_{\mathrm{drift}} &:= \frac{1}{2}\int dt\;\hat{g}_{ab}\,\Hess_{ab;cd,ef}(K_0)\,g_{cd}\,g_{ef},
  \label{eq:S3-drift}\\
  S^{(3)}_{\mathrm{noise}} &:= \frac{1}{2}\int dt\;\hat{g}_{ab}\,M_{ab,cd;ef}(K_0)\,g_{ef}\,\hat{g}_{cd}.
  \label{eq:S3-noise}
\end{align}
Power counting: free theory $O(1)$, one cubic vertex insertion $O(n^{-1/2})$, two insertions $O(n^{-1})$.

The next section introduces the free propagators $R$ and $C$,
and shows that the ODEs for $V_4$ and $\Keft$ admit a diagrammatic reinterpretation from this cubic expansion.

\subsection{Diagrammatic Interpretation}\label{sec:diag}

Using the pair index notation $A=(ab)$, $B=(cd)$, the nonzero free propagators in the quadratic action (\ref{eq:quadratic-action}) are the retarded response $R_{AB}(t,s) := \langle g_A(t)\hat{g}_B(s)\rangle_0$ (nonzero only for $t > s$) and the correlation $C_{AB}(t,s) := \langle g_A(t)g_B(s)\rangle_0$.
The It\^o convention (\ref{eq:ito-causal}) enforces $R(s,s) = 0$.
The equal-time covariance $V_{4;AB}(t) := C_{AB}(t,t)$ corresponds to $V_4$ from Section~\ref{sec:kernel}.
Table~\ref{tab:feynman-rules} summarises the full set of Feynman rules read off from the action (\ref{eq:quadratic-action})--(\ref{eq:S3-noise}).

\begin{table}[htbp]
\centering
\caption{Feynman rules for the quadratic and cubic terms of the collective bilocal EFT.
Time flows from left to right.
The arrowed line denotes the response propagator $R$, and the plain line denotes the covariance $C$.}
\label{tab:feynman-rules}
\small
\renewcommand{\arraystretch}{1.3}
\begin{tabular}{@{}cl>{\raggedright\arraybackslash}p{7.5cm}@{}}
  \hline
  Symbol & Element & Factor \\
  \hline
  \multicolumn{3}{@{}l@{}}{\textit{Propagators}} \\
  $\vcenter{\hbox{%
    \begin{tikzpicture}[baseline=-0.5ex,x=1em,y=1em]
      \draw[resp](0,0)--(1.8,0);
    \end{tikzpicture}}}$ &
  Retarded response $R$ &
  $R_{AB}(t,s)$; arrow from the $\hat g$ leg at time $s$ to the $g$ leg at time $t$; retarded, with $R(s,s)=0$ \\
  $\vcenter{\hbox{%
    \begin{tikzpicture}[baseline=-0.5ex,x=1em,y=1em]
      \draw[line width=.45pt](0,0)--(1.8,0);
    \end{tikzpicture}}}$ &
  Correlation $C$ &
  $C_{AB}(t,s)=\langle g_A(t)g_B(s)\rangle_0$;
undirected $g$-$g$ contraction, with $C(s,s)=V_4(s)$ \\
  \hline
  \multicolumn{3}{@{}l@{}}{\textit{Vertices}} \\
  $\vcenter{\hbox{%
    \begin{tikzpicture}[baseline=-0.5ex,x=1em,y=1em]
      \node[bulk](S) at (0,0){};
      \draw[resp](S.east)--(1.6, 0.55);
      \draw[resp](S.east)--(1.6,-0.55);
    \end{tikzpicture}}}$ &
  $\hat{g}\hat{g}$ noise-source vertex $\Sigma$ &
  $\Sigma_{AB}(K_0)$; local insertion with two $\hat g$ legs \\
  $\vcenter{\hbox{%
    \begin{tikzpicture}[baseline=-0.5ex,x=1em,y=1em]
      \node[dvert](V) at (0,0){};
      \draw[resp](V.east)--(1.2,0);
      \draw[line width=.45pt](V.west)--(-1.0, 0.4);
      \draw[line width=.45pt](V.west)--(-1.0,-0.4);
    \end{tikzpicture}}}$ &
  $\hat{g}gg$ drift cubic vertex $\tfrac{1}{2}\Hess$ &
  $\tfrac12\,\Hess_{A;BC}(K_0)$; one $\hat g$ leg and two $g$ legs \\
  $\vcenter{\hbox{%
    \begin{tikzpicture}[baseline=-0.5ex,x=1em,y=1em]
      \node[dvert](V) at (0,0){};
      \draw[resp](V.north east)--(0.9, 0.5);
      \draw[resp](V.south east)--(0.9,-0.5);
      \draw[line width=.45pt](V.west)--(-1.0,0);
    \end{tikzpicture}}}$ &
  $\hat{g}\hat{g}g$ noise cubic vertex $\tfrac{1}{2}M$ &
  $\tfrac12\,M_{AB;C}(K_0)$; two $\hat g$ legs and one $g$ leg \\
  \hline
  \multicolumn{3}{@{}l@{}}{\textit{External}} \\
  $\vcenter{\hbox{%
    \begin{tikzpicture}[baseline=-0.5ex,x=1em,y=1em]
      \node[obs] at (0,0){};
    \end{tikzpicture}}}$ &
  Observation point &
  External $g$ insertion at the observation time \\
  \hline
\end{tabular}
\end{table}

\paragraph{Free theory and $V_4$ transport.}
Applying the It\^o formula to $g_Ag_B$ in the free theory SDE $dg = \chi_{K_0}g\,dt + B(K_0)\,dW$ reproduces the $V_4$ ODE (\ref{eq:V4-ode}).
Its integral form has a clear diagrammatic reading:
\begin{equation}
  V_{4;AB}(t)
  = \underbrace{R_{AC}(t,0)\,V_{4;CD}(0)\,R_{BD}(t,0)}_{\diagVFourTransport}
  + \int_0^t du\;
    \underbrace{R_{AC}(t,u)\,\Sigma_{CD}(K_0(u))\,R_{BD}(t,u)}_{\diagVFourSource}.
  \label{eq:V4-integral}
\end{equation}
The first term transports the initial covariance $V_4(0)$ (the blob labelled $V_4^0$) forward along two response lines; the second integrates the noise source $\Sigma(K_0(u))$ at each intermediate time $u$, propagated to the observation point by two response lines.

\paragraph{Cubic vertices and $\Keft$ tadpole.}

From the power counting in Section~\ref{sec:background}, the mean of the fluctuation field $m_A(t) := \langle g_A(t)\rangle$ is $O(n^{-1/2})$, meaning $m_A(t) = n^{-1/2}\Keft{}_A(t) + O(n^{-3/2})$.
In the free theory $\langle g\rangle_0 = 0$, so the first nonzero contribution comes from a single cubic vertex insertion:
\begin{equation}
  \langle g_A(t)\rangle
  = \frac{1}{\sqrt{n}}\langle g_A(t)\,S^{(3)}_{\mathrm{int}}\rangle_0 + O(n^{-1}).
  \label{eq:1pt-expand}
\end{equation}

For the drift vertex $S^{(3)}_{\mathrm{drift}}$, the Wick expansion of $\langle g_A(t)\hat{g}_B(s)g_C(s)g_D(s)\rangle_0$ yields $R_{AB}(t,s)\,V_{4;CD}(s)$ since the equal-time response $R(s,s)$ vanishes.
This gives the one-loop tadpole integral:
\begin{equation}
  \langle g_A(t)\,S^{(3)}_{\mathrm{drift}}\rangle_0
  = \frac{1}{2}\int_0^t ds\;R_{AB}(t,s)\,\Hess_{B;CD}(s)\,V_{4;CD}(s).
  \label{eq:drift-tadpole}
\end{equation}
Adding the homogeneous solution $R_{AB}(t,0)\,\Keft{}_B(0)$ yields the complete integral equation for $\Keft$:
\begin{equation}
  \Keft{}_{A}(t)
  = \underbrace{R_{AB}(t,0)\,\Keft{}_{B}(0)}_{\diagKOneTransport}
  + \int_0^t ds\;
    \underbrace{\tfrac{1}{2}\,R_{AB}(t,s)\,\Hess_{B;CD}(s)\,V_{4;CD}(s)}_{\diagKOneSource}.
  \label{eq:Keft-integral}
\end{equation}
The first term propagates the initial condition $\Keft(0)$ (the blob labelled $K_1^0$) forward via a single response line; the tadpole term integrates the one-loop insertion $\frac{1}{2}\Hess[K_0(s)]:V_4(s)$, in which the two $C$ legs of the drift vertex are contracted into a self-loop.
Applying $(\partial_t - \chi_{K_0(t)})$ to both sides of (\ref{eq:Keft-integral}) reproduces the ODE (\ref{eq:K1-ode}).

Conversely, all Wick contractions arising from the noise vertex $S^{(3)}_{\mathrm{noise}}$ (the $\hat{g}\hat{g}g$ vertex) vanish because either $R(s,s) = 0$ or $\langle\hat{g}\hat{g}\rangle_0 = 0$.
Hence, there is no additional contribution to $\Keft$ from the noise vertex.
The $K_0$--$V_4$--$\Keft$ hierarchy thus emerges as the natural order of the lowest-order loop expansion in the collective bilocal EFT, not as an ad hoc collection of correction equations.

\begin{remark}[Relation to Banta et al.'s $V_4^{(\phi)}$]
\label{rem:bridge}
The $V_4$ of this paper is the kernel fluctuation covariance $V_4^{(G)}$,
which is distinct from the preactivation connected 4-point function $V_4^{(\phi)}$ of
Banta et al.~\cite{banta2024structures}.
The law of total covariance yields the exact bridge identity
\begin{equation}
  V_{4,ab,cd}^{(G),\ell+1}
  = \E\!\bigl[\hatQ^\ell_{ac}\hatQ^\ell_{bd}+\hatQ^\ell_{ad}\hatQ^\ell_{bc}\bigr]
  + V_{4,ab,cd}^{(\phi),\ell+1},
  \label{eq:bridge-exact}
\end{equation}
and at leading order in large $n$ this reduces to
\begin{equation}
  V_4^{(G),\ell+1} = \Omega(\bar{K}^{\ell+1}) + V_4^{(\phi),\ell+1} + O(n^{-1}),
  \label{eq:bridge-leading}
\end{equation}
where $\Omega(\bar{K})_{ab,cd} := \bar{K}_{ac}\bar{K}_{bd}+\bar{K}_{ad}\bar{K}_{bc}$
is the conditional-Wishart piece.
Thus $V_4^{(G)}$ of this paper corresponds to $V_4^{(\phi)}$ of Banta et al.\ plus the Wishart term;
Eqs.~(14) and (15) of Banta et al.\ recursively update $V_4^{(\phi)}$, while the $V_4$ ODE here recurses $V_4^{(G)}$.
A full derivation is given in Appendix~\ref{sec:appx-banta}.
\end{remark}

\section{Numerical Validation}\label{sec:numerics}

\subsection{Validation of the $G$-only Covariance Theory ($K_0$ and $V_4$)}\label{sec:V4-numerics}

The baseline configuration uses width $n=64$, activation function $\sigma = \tanh$, $\Cw=2$, $\Cb=0$, input dimension $N=4$, and initial kernel $(K_0^0)_{ab} = (1-\rho)\delta_{ab}\,\kappa + \rho\,\kappa$ ($\kappa=2.0$, $\rho=0.3$).
The residual scaling is $\eps = 0.05$, depth $L=800$ ($T = \eps^2 L = 2$), and ensemble size $M = 5\times10^6$.
The empirical estimator for $V_4^\ell$ is $V_{4,ab,cd}^{\ell,\mathrm{emp}} := n\bigl(\frac{1}{M}\sum_{I} G_{ab}^{\ell,(I)} G_{cd}^{\ell,(I)} - \bar{G}_{ab}^\ell\,\bar{G}_{cd}^\ell\bigr)$.
With the Gaussian initial condition $\phi_i^0 \sim \mathcal{N}(0,K_0^0)$, the Wishart distribution gives $V_{4,ab,cd}^{0,\mathrm{emp}} \approx K_{ac}^0 K_{bd}^0 + K_{ad}^0 K_{bc}^0$.

As shown in Fig.~\ref{fig:K0-V4}, $\bar{G}^\ell \approx K_{0,\mathrm{th}}^\ell$ holds well at all depths.
However, for $V_4$, the gap between $V_{4,\mathrm{emp}}^\ell$ and $V_{4,\mathrm{EFT}}^\ell$ manifests as an $O(1)$ systematic overestimation at long times ($t \gtrsim 1$).
Additional experiments confirm that this relative error ($\approx 11\%$ at $t=2$) is nearly unchanged across $\varepsilon$-sweeps ($\varepsilon \in \{0.10, 0.07, 0.05\}$, Fig.~\ref{fig:V4-eps-sweep}) and $n$-sweeps ($n \in \{64, 128, 256\}$), and grows super-linearly with $T$.
Defining the one-step equation residual
$R_{V_4}^\ell := (V_{4,\mathrm{emp}}^{\ell+1} - V_{4,\mathrm{emp}}^\ell)/\varepsilon^2
- (\chi_{K_0^\ell}V_{4,\mathrm{emp}}^\ell + V_{4,\mathrm{emp}}^\ell\chi_{K_0^\ell}^\top + \Sigma(K_0^\ell))$,
this residual collapses in both $\varepsilon$ and $n$ at long times and is nonzero, strongly suggesting that the linearized $G$-only covariance theory based on GC0+LIN fails at long times due to the accumulation of non-Gaussian components of $\phi^\ell$.

To diagnose the source term, we directly measured the exact covariance source $\Sigma_{\mathrm{mic},ab,cd}^\ell := \E\bigl[G_{ac}^\ell\hatQ_{bd}^\ell + \dots\bigr]$ using the empirical sigma-kernel.
The relative error compared to the theoretical approximation $\Sigma(K_0^\ell)$ is $\leq 0.51\%$ even at $t=2$ (Table~\ref{tab:sigma-diag}).
This indicates that the source approximation is highly accurate, consistent with the interpretation that the dominant contribution to the $V_4$ equation residual comes from the error in the $\chi$ transport term.

\begin{figure}[htbp]
  \centering
  \includegraphics[width=\textwidth]{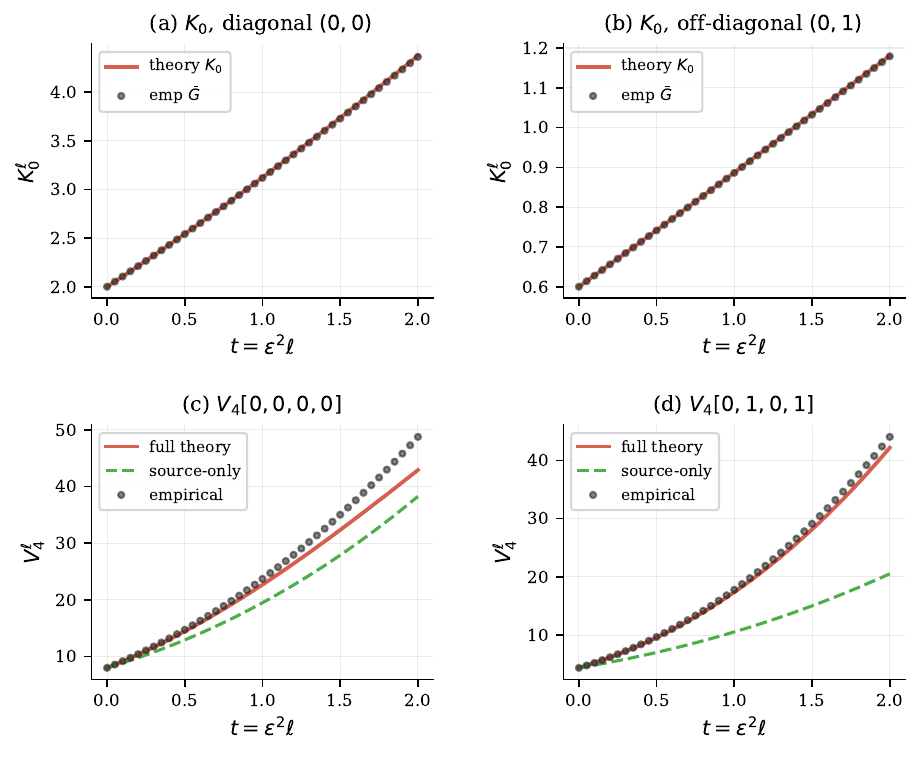}
  \caption{
    \textbf{Trajectories of $K_0$ and $V_4$} ($n=64$, $\eps=0.05$, $T=2.0$, $M=5\times10^6$).
    Upper: $K_0$ theory (solid) vs.\ empirical $\bar{G}^\ell$ (dots).
    Lower: $V_4$ for the $(0,0,0,0)$ (left) and $(0,1,0,1)$ (right) components.
    Solid line: full EFT theory (numerical solution of (\ref{eq:V4-ode}));
    dashed: source-only approximation;
    dots: experimental values $V_{4,\mathrm{emp}}^\ell$.
  }
  \label{fig:K0-V4}
\end{figure}

\begin{figure}[htbp]
  \centering
  \includegraphics[width=\textwidth]{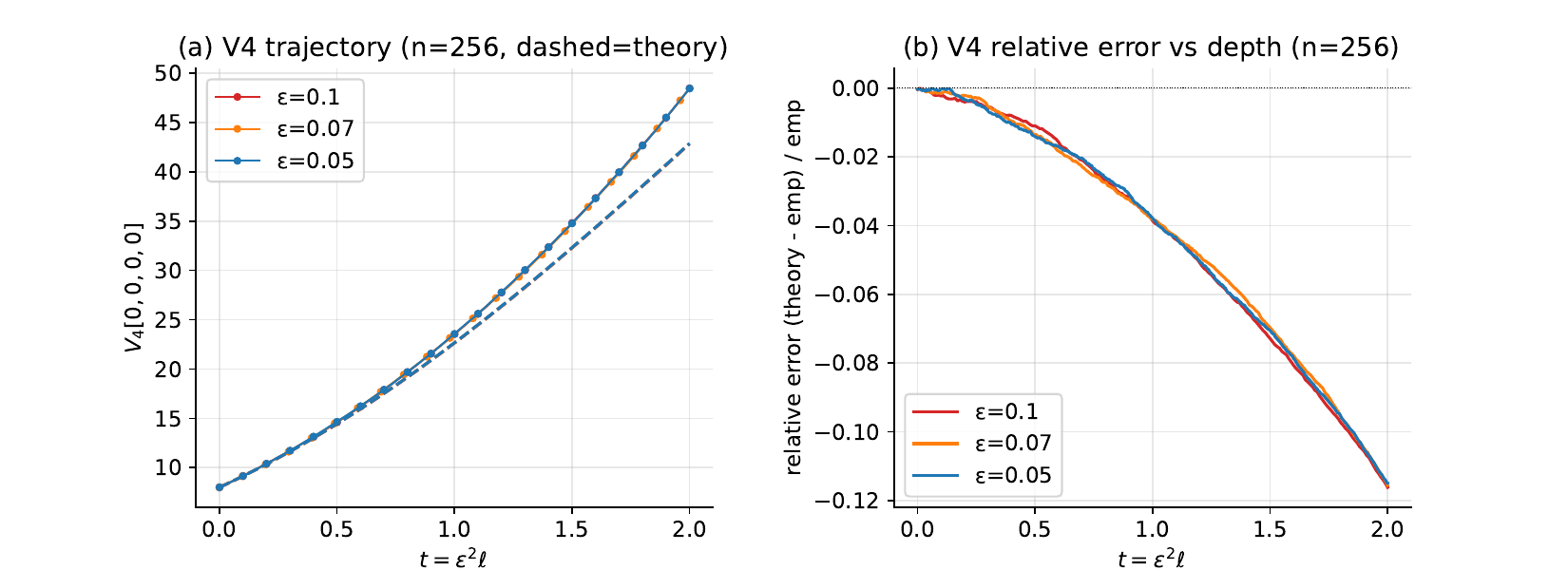}
  \caption{
    \textbf{$\varepsilon$-dependence of $V_4$ trajectories} ($n=256$, $T=2.0$, $M=5\times10^5$).
    (a) $V_4[0,0,0,0]$ for $\varepsilon \in \{0.10, 0.07, 0.05\}$ (dashed: theoretical solution of (\ref{eq:V4-ode}); solid with dots: experimental values).
    All three curves nearly overlap, confirming the error is not an $O(T\varepsilon^2)$ discretization artifact.
    (b) Relative error $(\text{theory} - \text{emp})/\text{emp}$ vs.\ time.
  }
  \label{fig:V4-eps-sweep}
\end{figure}

\begin{table}[h]
  \centering
  \caption{$\Sigma_{\mathrm{mic}}$ vs.\ $\Sigma(K_0)$ (representative component $(ab,cd)=(00,00)$)}
  \label{tab:sigma-diag}
  \begin{tabular}{cccc}
    \hline
    $t$ & $\Sigma_{\mathrm{mic}}$ & $\Sigma(K_0)$ & Relative error \\
    \hline
    0.00 & 8.342 & 8.322 & $-0.24\%$ \\
    0.50 & 11.445 & 11.429 & $-0.14\%$ \\
    1.00 & 14.903 & 14.907 & $+0.03\%$ \\
    1.50 & 18.665 & 18.711 & $+0.25\%$ \\
    2.00 & 22.688 & 22.804 & $+0.51\%$ \\
    \hline
  \end{tabular}
\end{table}

\subsection{Validation of $\Keft$ and the Source Mismatch}\label{sec:source-diag}

We compare $\Kmic^\ell = n(\bar{G}^\ell - K_0^\ell)$ with the EFT prediction $\Keft^\ell$ and a reference value $K_{1,u_1\mathrm{ex}}^\ell$ obtained by numerically integrating the exact source recursion (\ref{eq:K1mic-recursion}).
The EFT model source is defined as
\begin{equation}
  \Umod^\ell :=
    \frac{1}{\Cw}\chi_{K_0^\ell}[\Keft^\ell]
    + \frac{1}{2\Cw}\Hess[K_0^\ell]:V_{4,\mathrm{EFT}}^\ell
  \label{eq:Umod-def}
\end{equation}
(satisfying $\Keft^{\ell+1} = \Keft^\ell + \eps^2\Cw\,\Umod^\ell$).
Agreement between $K_{1,u_1\mathrm{ex}}$ and $\Kmic$ confirms that the recursive mean structure is correct and that the failure of $\Keft$ is localized to $\Umod$.

As shown in Fig.~\ref{fig:K1-main}, $\Keft$ deviates systematically from $\Kmic$, overestimating by a factor of 2--3 in the off-diagonal components.
Conversely, $K_{1,u_1\mathrm{ex}}$ agrees well with $\Kmic$, confirming that the failure is indeed localized in the source model $\Umod$.
The primary source of error is the breakdown of the NLO source closure in (GC1).

\begin{theorem}\label{thm:U1-zero}
  When $\phi_i^0$ are i.i.d.\ $\mathcal{N}(0, K_0^0)$, $\Uex^0 = 0$ holds exactly at finite $n$.
\end{theorem}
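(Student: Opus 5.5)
The plan is to compute $\bar{S}^0$ directly from its definition and compare it with $E_2(K_0^0)$. Recall that $\Uex^0 = n\bigl(\bar{S}^0 - E_2(K_0^0)\bigr)$ with $S^0_{ab} = n^{-1}\sum_{i=1}^n \sigma(\phi_i^0(a))\sigma(\phi_i^0(b))$. The background is defined by the recursion $K_0^{\ell+1} - K_0^\ell = \eps^2 Q(K_0^\ell)$, and its initial value at $\ell=0$ is the given matrix $K_0^0$. This is the same matrix appearing in the initialization \eqref{eq:phi0-init}, so nothing about the background needs to be solved.

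\textbf{Step 1 (linearity of expectation).} Write
\begin{equation*}
\bar{S}^0_{ab} = \frac{1}{n}\sum_{i=1}^n \E\bigl[\sigma(\phi_i^0(a))\sigma(\phi_i^0(b))\bigr].
\end{equation*}
No independence between neurons is needed here, only linearity; identical distribution is what the next step uses.

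\textbf{Step 2 (identify each summand).} Each $\phi_i^0$ is exactly $\mathcal{N}(0,K_0^0)$ at finite $n$. By the definition \eqref{eq:E2-def}, each summand therefore equals $E_2(K_0^0)_{ab}$. This is the tadpole-free identity $\E[\Delta_j^{K}]=0$ under the reference measure $\mathcal{N}(0,K)$, stated in Section~\ref{sec:recursion}, applied with $K=K_0^0$. Averaging over $i$ gives $\bar{S}^0 = E_2(K_0^0)$, and multiplying by $n$ yields $\Uex^0 = 0$. No large-$n$ limit or closure assumption (GC0, LIN, GC1) enters at any point.

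\textbf{Main obstacle and side conditions.} There is no genuine analytic obstacle. The only care needed is integrability, so that $E_2(K_0^0)$ and $\bar{S}^0$ are finite. This holds for bounded $\sigma$ such as $\tanh$, or for any $\sigma$ of polynomial growth, since Gaussian moments are then finite. It also helps to note what the theorem shows, in contrast with the source model \eqref{eq:Umod-def}. At $\ell=0$ one has $V_4^0 = \Omega(K_0^0) \neq 0$ (the Wishart term). The natural initial condition is $\Keft^0 = 0$, because $\bar{K}^0 = K_0^0$ exactly. Hence $\Umod^0 = \frac{1}{2\Cw}\Hess[K_0^0]:V_4^0$, which is generically nonzero. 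The discrepancy arises because (GC1) expands around the random $G^0$, whereas the neurons are exactly Gaussian with covariance $K_0^0$. The kernel fluctuation $V_4^0$ therefore does not feed back into $\E[\hatQ^0]$.
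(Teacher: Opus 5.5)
Your proposal is correct and follows the same route as the paper's proof. Since each $\phi_i^0$ is exactly $\mathcal{N}(0,K_0^0)$, linearity of expectation and the definition of $E_2$ give $\bar{S}^0 = E_2(K_0^0)$, and hence $\Uex^0 = 0$ at finite $n$ with no closure assumption.
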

\begin{proof}
  From $\phi_i^0 \sim \mathcal{N}(0, K_0^0)$ i.i.d., we have $\E[S_{ab}^0] = E_2(K_0^0)_{ab}$.
  Therefore $\Uex^0 = n(\bar{S}^0 - E_2(K_0^0)) = 0$.
\end{proof}

However, the EFT model source at $\ell=0$ (since $\Keft^0=0$) is
$\Umod^0 = \frac{1}{2\Cw}\Hess[K_0^0]:V_4^0 \neq 0$,
confirmed numerically in our setup.
This systematic mismatch is present even before any long-time $V_4$ drift occurs.

A depth-wise comparison (Fig.~\ref{fig:U1-source}) further illustrates this: $\Umod$ substantially overestimates $\Uex$ at all depths for the off-diagonal component.
This mismatch does not change when varying $\eps$, confirming it is an $O(1)$ systematic error of the current $G$-only NLO truncation scheme, with the $V_4$ breakdown acting as a secondary amplification factor.

\begin{figure}[htbp]
  \centering
  \includegraphics[width=\textwidth]{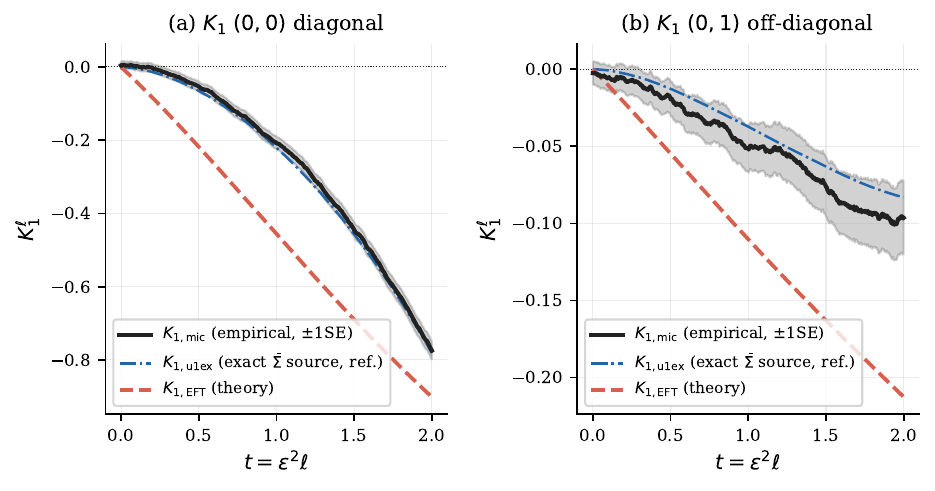}
  \caption{
    \textbf{Validation of $K_1$} ($n=64$, $\eps=0.05$, $T=2.0$, $M=5\times10^6$).
    Depth-wise trajectories of $\Kmic^\ell = n(\bar{G}^\ell - K_0^\ell)$ (empirical, $\pm 1\mathrm{SE}$ band),
    $K_{1,u_1\mathrm{ex}}$ (reference using exact source $\Uex$),
    and $\Keft$ (EFT theory).
    Left: diagonal component $(0,0)$; right: off-diagonal $(0,1)$.
  }
  \label{fig:K1-main}
\end{figure}

\begin{figure}[htbp]
  \centering
  \includegraphics[width=\textwidth]{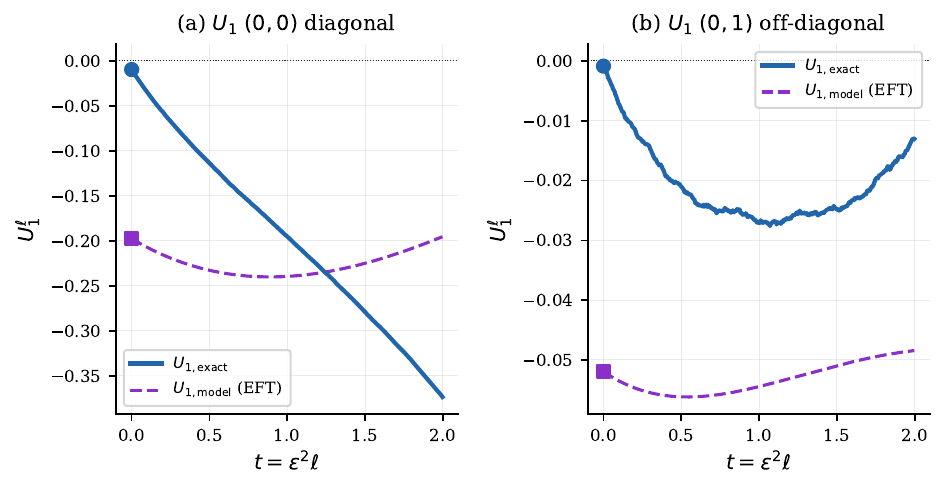}
  \caption{
    \textbf{Depth-wise comparison of the $U_1$ source} ($n=64$, $\eps=0.05$, $T=2.0$, $M=5\times10^6$).
    Comparison of $\Uex^\ell = n(\bar{S}^\ell - E_2(K_0^\ell))$ (solid) and
    EFT model source $\Umod^\ell$ (dashed, Eq.~(\ref{eq:Umod-def})) at all depths.
    Left: diagonal component $(0,0)$; right: off-diagonal $(0,1)$.
  }
  \label{fig:U1-source}
\end{figure}

\section{Discussion}\label{sec:disc}

\subsection{Hierarchical Localization of Breakdown}

The errors in $K_0$, $V_4$, and $K_1$ can each be localized to the limits of different approximation stages: GC0, GC0+LIN, and GC1 respectively (Table~\ref{tab:failure}).

\begin{table}[h]
  \centering
  \caption{Mechanism and onset of errors in each quantity}
  \label{tab:failure}
  \begin{tabular}{lp{5.5cm}l}
    \hline
    Quantity & Primary error mechanism & Onset \\
    \hline
    $K_0$ & No breakdown observed in the tested regime (GC0 sufficient in principle) & --- \\
    $V_4$ & Approximation error in $\chi$ transport (limit of GC0+LIN) & Long times ($t \gtrsim 1$) \\
    $K_1$ & Approximation error in the NLO source model of GC1 & From $\ell=0$ \\
    \hline
  \end{tabular}
\end{table}

\paragraph{Mechanism of $V_4$ Breakdown.}
The $\chi$ transport term $\chi_{K_0^\ell}[V_4^\ell]$ in the $V_4$ ODE (\ref{eq:V4-ode}) arises by replacing $\hatQ^\ell \approx Q(G^\ell)$ via (GC0) and performing a first-order Taylor expansion around $K_0^\ell$ (the (LIN) approximation).
As $\phi^\ell$ becomes non-Gaussian at greater depths, the drift is determined by the full joint distribution of $G^\ell$ and the nonlinear statistics $S^\ell$, not $G^\ell$ alone.
Since $\chi_{K_0^\ell}$ cannot capture this non-Gaussian contribution, the predictive accuracy of $G$-only transport degrades over time.
This is the root cause of the $\varepsilon$- and $n$-independent growth of the equation residual $R_{V_4}^\ell$.
Direct measurement confirms that the source approximation $\Sigma(K_0)$ remains highly accurate (relative error $\leq 0.51\%$), further supporting that the primary cause of $V_4$ breakdown is indeed the error in the $\chi$ transport term.

\paragraph{Error in $\Keft$ and the Role of (GC1).}
Separately from the $V_4$ error, $\Keft$ has an error intrinsic to the NLO source model of (GC1).
As shown in Section~\ref{sec:source-diag}, $K_{1,u_1\mathrm{ex}} \approx \Kmic$ confirms that the exact source recursion structure is correct, and the error in $\Keft$ is localized to the inaccuracy of the source model $\Umod = \chi_{K_0}[\Keft]/\Cw + \Hess[K_0]:V_4/(2\Cw)$.
This source error is independent of the long-time drift of $V_4$: at $\ell=0$, $\Uex^0 = 0$ exactly (Theorem~\ref{thm:U1-zero}), while $\Umod^0 = \Hess[K_0^0]:V_4^0/(2\Cw) \neq 0$.
This discrepancy occurs before $V_4$ has changed, showing that the GC1 approximation of closing $\E[\hatQ^\ell]$ using only $G^\ell$ and $V_4^\ell$ carries a systematic error.
The long-time error of $V_4$ further amplifies this source error through the $\Hess[K_0]:V_4$ term, but this is a secondary effect layering on top of the initial error.

\subsection{Future Directions}

To theoretically reproduce $\Uex^\ell$, it is necessary to add the sigma-kernel $S^\ell$ as an independent collective variable.
Under the same continuous-depth Gaussian-diffusion approximation used for $K_0$ (cf.~\cite{peluchetti2021doubly_infinite,li2022neural_covariance_sde}), a single neuron $z_t\in\mathbb{R}^N$ evolves as a centered diffusion with infinitesimal covariance $Q_t\,dt$.
The generator of this diffusion, $\mathcal{L}_Q f(z) = \frac{1}{2}\sum_{ab}Q_{ab}\partial_a\partial_b f(z)$, governs the time evolution of any smooth observable $f$ via $\partial_t \mathbb{E}[f(z_t)] = \mathbb{E}[(\mathcal{L}_{Q_t}f)(z_t)]$.
Choosing $f(z)=\sigma^{(p)}(z_a)\sigma^{(q)}(z_b)$ and applying $\mathcal{L}_{Q_t}$ explicitly yields an infinite observable hierarchy for $(G, S^{(p,q)})$:
\begin{equation}
  \partial_t \bar{S}_{ab}^{(p,q)}
  = \frac{Q_{aa}}{2}\bar{S}_{ab}^{(p+2,q)} + Q_{ab}\bar{S}_{ab}^{(p+1,q+1)}
    + \frac{Q_{bb}}{2}\bar{S}_{ab}^{(p,q+2)}.
\end{equation}
Truncating at $p+q \leq M$ and closing with a Gaussian closure~\cite{roberts2022principles} yields a systematic approximation.
However, accurately predicting $\Uex^\ell$ requires fluctuation-level corrections (the dynamics of $u^\ell := \sqrt{n}(S^{(0,0),\ell} - E_2(K_0^\ell))$ and $\mathrm{Cov}(v^\ell, u^\ell)$), which remains a direction for future work.

\section{Conclusion}\label{sec:conc}

This paper systematically developed the finite-width dynamics of pre-activation ResNets at initialization, starting from the exact conditional Gaussian law of the increment, and clarified the regime of validity and limitations of the $G$-only collective EFT.

By taking the increment as the primary variable, determinant cancellation yields the exact discrete MSRJD action (\ref{eq:MSRJD}) with no ghost fields. In the collective bilocal EFT, the finite-width fluctuations of $\hatQ^\ell$ are reorganized into the noise source $\Sigma$, the transport $\chi$, and the tadpole source $\Hess$.
The conditional covariance of $J^\ell$ guarantees $r_J^\ell = O_p(1)$, establishing its subleading nature.
As an exact identity around the chosen background $K_0$, $\Uex^\ell = n(\bar{S}^\ell - E_2(K_0^\ell))$ holds exactly at finite $n$.

The equations for $K_0$ and $V_4$ are derived systematically from (GC0) and (LIN), and that for $\Keft$ additionally from (GC1).
$\Keft$ is diagrammatically reinterpreted as the one-loop tadpole of the drift cubic vertex, while the noise cubic vertex vanishes under the It\^o convention.

Regarding the finite validity window, $K_0$ is well reproduced at all depths.
However, the $V_4$ equation residual accumulates to an $O(1)$ error at $t=O(1)$, independent of $\varepsilon$ and $n$, revealing a finite validity window for the $G$-only closure.
The primary failure of $\Keft$ is localized to the GC1 source model already at $\ell=0$, with the $V_4$ error acting as a secondary amplifier.
Both limitations point toward the necessity of extending the state space to the $(G, S^{(p,q)})$ observable hierarchy.

\appendix
\section{Unified Description of ResNets and MLPs, and Correspondence with Banta et al.}\label{sec:appx-mlp}

This paper has developed the $G$-only formulation targeting the ResNet block $(\alpha,\eps)=(1,\eps\ll1)$.
This appendix extends it to the generalized block $\phi_i^{\ell+1}=\alpha\phi_i^\ell+\eps\eta_i^\ell$ and organizes the relationship with the variable $V_4^{(\phi)}$ of Banta et al.~\cite{banta2024structures} in the MLP limit $(\alpha,\eps)=(0,1)$.

\subsection{Exact Update and Derivation of the Approximate Recursion}\label{sec:appx-exact}

Under $\eta_i^\ell\mid\phi^\ell\sim\mathcal{N}(0,\hatQ^\ell)$, the empirical kernel $G^\ell_{ab}:=\frac{1}{n}\sum_i\phi_i^\ell(a)\phi_i^\ell(b)$ updates exactly as $G^{\ell+1} = \alpha^2 G^\ell + \alpha\eps\,H^\ell + \eps^2 J^\ell$, where $H^\ell$ is the cross-term and $J^\ell$ is the increment Gram matrix.
The conditional variances are $n\,\mathrm{Cov}(H^\ell_{ab},H^\ell_{cd}\mid\cF_\ell) =: \Sigma_H(G^\ell,\hatQ^\ell)_{ab,cd}$ and $n\,\mathrm{Cov}(J^\ell_{ab},J^\ell_{cd}\mid\cF_\ell) =: \Omega(\hatQ^\ell)_{ab,cd}$.
The exact update for $V_4^{(G),\ell}:=\E[v^\ell(v^\ell)^\top]$ involves $d^\ell:=\sqrt{n}(\hatQ^\ell-\E[\hatQ^\ell])$ and is unclosed.

Substituting $d^\ell\simeq\chi_{K_0^\ell}[v^\ell]$ via (GC0) and (LIN) (with (GC1) additionally required for the $\Keft$ equation), and setting $A^\ell:=\alpha^2 I+\eps^2\chi_{K_0^\ell}$, we obtain the approximate recursions:
\begin{align}
  K_0^{\ell+1} &= \alpha^2 K_0^\ell+\eps^2 Q(K_0^\ell), \label{eq:K0-alpha}\\
  V_4^{(G),\ell+1} &\simeq A^\ell\,V_4^{(G),\ell}\,(A^\ell)^\top +\alpha^2\eps^2\,\Sigma(K_0^\ell)+\eps^4\,\Omega(K_0^\ell), \label{eq:V4-alpha}\\
  K_1^{\ell+1} &\simeq \alpha^2 K_1^\ell+\eps^2\chi_{K_0^\ell}[K_1^\ell] +\frac{\eps^2}{2}\,\Hess[K_0^\ell]:V_4^{(G),\ell}. \label{eq:K1-alpha}
\end{align}
The $H$-channel source $\Sigma$ appears only when $\alpha\neq0$; the $J$-channel source $\Omega$ is always present (though subleading at order $\eps^4$ in the ResNet limit).

In the ResNet limit $(\alpha,\eps)=(1,\eps\ll1)$, substituting $A^\ell=I+\eps^2\chi_{K_0^\ell}+O(\eps^4)$ into (\ref{eq:V4-alpha}) and (\ref{eq:K1-alpha}) and setting $dt:=\eps^2$ recovers the ODEs (\ref{eq:V4-ode}) and (\ref{eq:K1-ode}) of the main text.
In the MLP limit $(\alpha,\eps)=(0,1)$, with $A^\ell=\chi_{K_0^\ell}$, we obtain $K_0^{\ell+1} = Q(K_0^\ell)$, $V_4^{(G),\ell+1} \simeq \chi_{K_0^\ell}\,V_4^{(G),\ell}\,\chi_{K_0^\ell}^\top + \Omega(K_0^\ell)$, and $K_1^{\ell+1} \simeq \chi_{K_0^\ell}[K_1^\ell] +\frac{1}{2}\Hess[K_0^\ell]:V_4^{(G),\ell}$.

\subsection{Comparison with Banta et al.: Two Variables and the Bridge}\label{sec:appx-banta}

In the MLP case, $\phi_i^{\ell+1}\mid\phi^\ell\sim\mathcal{N}(0,\hatQ^\ell)$ holds exactly. Banta et al.~\cite{banta2024structures} use this to define the preactivation connected 4-point $n^{-1} V_{4,ab,cd}^{(\phi),\ell} := \langle\phi_i^\ell(a)\phi_i^\ell(b)\phi_j^\ell(c)\phi_j^\ell(d)\rangle_C$ ($i\neq j$).
The corresponding exact identity is $V_{4,ab,cd}^{(\phi),\ell+1} = n\,\mathrm{Cov}(\hatQ^\ell_{ab},\hatQ^\ell_{cd})$.

For the kernel covariance $V_4^{(G)}$ of this paper, the law of total covariance gives the exact bridge identity:
\begin{equation}
  V_{4,ab,cd}^{(G),\ell+1}
  = \E\!\bigl[\hatQ^\ell_{ac}\hatQ^\ell_{bd}+\hatQ^\ell_{ad}\hatQ^\ell_{bc}\bigr]
  + V_{4,ab,cd}^{(\phi),\ell+1}.
  \label{eq:VG-Vphi-bridge}
\end{equation}
At leading order in large $n$, this becomes $V_4^{(G),\ell+1} = \Omega(\bar{K}^{\ell+1}) + V_4^{(\phi),\ell+1} + O(n^{-1})$, showing that $V_4^{(G)}$ is $V_4^{(\phi)}$ plus the conditional-Wishart piece $\E[\Omega(\hatQ^\ell)]$.
Thus, Eqs.~(14) and (15) of Banta et al.\ recursively update $V_4^{(\phi)}$, while $V_4^{(G)}$ of this paper recurses an observable with the conditional-Wishart piece added.

The generalized block also has an exact conditional-Gaussian layer law in the ResNet case. The role of (GC0), (LIN), and (GC1) is to close the collective dynamics into a hierarchy in $G^\ell$ alone, not to treat non-Gaussianity of the layer law itself.
Despite the different derivation, the transport parts of the recursions share the same susceptibility operator $\chi$.
At the four-point level, Banta et al.'s local source term is the connected $\Delta$-correlator $(C_W)^2\langle\Delta\Delta\rangle_{K_0}$, whereas the present $V_4^{(G)}$ recursion carries the conditional-Wishart term $\Omega(K_0)$. The two leading-order recursions are related by the bridge identity (\ref{eq:VG-Vphi-bridge}).
Similarly, the NLO two-point flow of Banta et al.\ (Eqs.~(A.19) and (A.20)) has the same transport-plus-source structure, with the source written in terms of $V_4^{(\phi)}$, whereas the present $G$-only closed approximation writes the source in terms of $V_4^{(G)}$.

\bibliographystyle{JHEP}
\bibliography{refs}

\end{document}